\newtheorem{theorem}{Theorem}[section]
\newtheorem{lemma}[theorem]{Lemma}
\newtheorem{proposition}[theorem]{Proposition}
\newenvironment{definition}[1][Definition]{\begin{trivlist}
\item[\hskip \labelsep {\bfseries #1}]}{\end{trivlist}}
\newcommand{\x}[2]{x_{#1, #2}}
\newcommand{\calF}{\mathcal{F}}
\titlespacing{\section}{0pt}{2ex}{1ex}
\titlespacing{\subsection}{0pt}{1ex}{0ex}
\titlespacing{\subsubsection}{0pt}{0.5ex}{0ex}
\begin{document}
\title{Crowdsourcing Feature Discovery via\\ Adaptively Chosen Comparisons}
\author{
{\bf James Y. Zou} \\
Microsoft Research \\
Cambridge, MA \\
\And
{\bf Kamalika Chaudhuri}  \\
University of California, San Diego          \\
San Diego, CA \\
\And
{\bf Adam Tauman Kalai}   \\
Microsoft Research \\
Cambridge, MA \\
}

\maketitle

\begin{abstract}
We introduce an unsupervised approach to efficiently discover the underlying features in a data set via crowdsourcing. Our queries ask crowd members to articulate a feature common to two out of three displayed examples. In addition we also ask the crowd to provide binary labels to the remaining examples based on the discovered features. The triples are chosen {\em adaptively} based on the labels of the previously discovered features on the data set. In two natural models of features, hierarchical and independent, we show that a simple {\em adaptive} algorithm, using ``two-out-of-three'' similarity queries, recovers all features with less labor than any nonadaptive algorithm. Experimental results validate the theoretical findings.
\end{abstract}

\section{Introduction}

\begin{figure} % This figure is designed to fit on a single column
\includegraphics[width=3in]{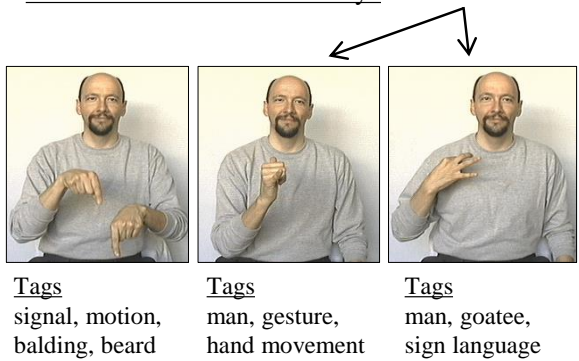}
\caption{Comparing three examples yields a useful feature whereas tagging them separately yields nondiscriminative features. }
\end{figure}

Discovering features is essential to the success of machine learning and statistics. Crowdsourcing can be used to discover these underlying features, in addition to merely labeling them on data at hand. This paper addresses the following unsupervised learning problem: given a data set, using as few crowd queries as possible, elicit a diverse set of salient, feature names along with their labels on that data set. For example, on a data set of faces, salient features might correspond to gender, the presence of glasses, facial expression, any numerous others. In this paper we focus on binary features, each of which can be thought of as a function mapping data to $\{0,1\}$. The term {\em feature name} refers to a string describing the feature (e.g., {\em male} or {\em wearing glasses}), and the {\em label} of a feature on an example refers the $\{0,1\}$-value of that feature on a that datum, as annotated by crowd workers.
Features are useful in exploratory analysis, for other machine learning tasks, and for browsing data by filtering on various facets. While the features we use are human-generated and human-labeled, they could be combined with features from machine learning, text analysis, or computer vision algorithms. In some cases, features provide a significantly more compact representation than other unsupervised representations such as clustering, e.g., one would need exponentially many clusters (such as {\em smiling white men with grey hair wearing glasses}) to represent a  small number of features.

A widely-used crowdsourcing technique for eliciting features is to simply ask people to tag data with multiple words or phrases. However, tagging individual examples fails to capture the differences between multiple images in a data set. To illustrate this problem, we asked 10 crowd workers to tag 10 random signs from an online dictionary of American Sign Language \cite{ASLdictionary}, all depicted by the same bearded man in a gray sweatshirt. As illustrated in Figure 1, the tags generally refer to his hair, clothes, or the general fact that he is gesturing with his hands. Each of the 33 tags could apply equally well to any of the 10 video snips, so none of the features could discriminate between the signs.

Inspired by prior work \cite{Patterson2012SunAttributes,crowdmedian,tamuz2011adaptively} and the familiar kindergarten question, ``which one does not belong?'', we elicit feature names by presenting a crowd worker with a triple of examples and asking them to name a feature {\em  common to any two} out of the three examples. We refer to this as a ``two-out-of-three'' or, more succinctly, 2/3 query. These features are meant to differentiate yet be common as opposed to overly specific features that capture peculiarities rather than meaningful distinctions. As shown in Figure 1, in contrast to tagging, the learned features partition the data meaningfully.

How should one choose such triples? We find that, very often, random triples redundantly elicit the same set of salient features. For example, 60\% of the responses on random sign triples distinguish signs that use one vs.\ two hands. To see why, suppose that there are two ``obvious'' complimentary features, e.g., {\em male} and {\em female}, which split the data into two equal-sized partitions and are more salient than any other, i.e., people most often notice these features first. If the data are balanced, then 75\% of triples can be resolved by one of these two features.

To address this inefficiency, once we've discovered a feature, e.g., one/two-handedness, we then ask crowd workers to label the remaining data according to this feature. This labeling is necessary eventually, since we require the data to be annotated according to all discovered features. Once we have labels for the data, we never perform a 2/3 query on {\em resolved} triples, i.e., those for which we have a feature whose labels are positive on two out of the three examples. Random 2/3 queries often result in the one of these salient features. Our adaptive algorithm, on the other hand, after learning the features of, say, ``male'' and ``female,'' always presents three faces labeled by the same gender (assuming consistent labeling) and thereby avoids eliciting the same feature again (or functionally equivalent features such as ``a man'').

The face data set also illustrates how some features are hierarchical while others are orthogonal. For instance, the feature ``bearded'' generally applies only to men, while the feature ``smiling'' is common across genders. We analyze our algorithm and show that it can yield large savings both in the case of hierarchical and orthogonal features. Proposition 4.1 states that our algorithm finds all $M$ features of a proper binary hierarchical ``feature tree'' using $M$ queries, whereas Proposition 4.2 states that any non-adaptive algorithm requires $\Omega(M^3)$ queries. The lower bound also suggests that ``generalist'' query responses are more challenging than ``specifics,'' e.g., in comparing a goat, a kangaroo, and a car, the generalist may say that the goat and kangaroo are both {\em animals} rather while the specifist may distinguish them as both {\em mammals}. We then present a more sophisticated algorithm that recovers $D$-ary trees on $M$ features and $N$ examples using $\tilde{O}(N+MD^2)$ queries, with high probability (see Proposition 4.3).

Finally, we show that in the case of $M$ independent random features, adaptivity can give an exponential improvement provided that there is sufficient data (Lemmas 5.2 and 5.3). For example, in the case of $M$ independent uniformly random features, our algorithm finds all features using fewer than $3M$ queries (in expectation) compared to a $\Omega(1.6^M)$ for a random triple algorithm. In all analysis, we do not include the cost of labeling the features on the data since this cost must be incurred regardless of which approach is used for feature elicitation. Moreover, the labeling cost is modest as workers took less than one second, amortized, to label a feature per image when batched (prior work \cite{Patterson2012SunAttributes} reported batch labeling for approximately \$0.001 per image-feature label).

Interestingly, our theoretical findings imply that 2/3 queries are sufficient to learn in both our models of hierarchial and independent features, with sufficient data. We also discuss 2/3 queries in comparison to other types, e.g., why not ask a ``1/3 query'' for a feature that distinguishes one example from two others? Note that 1/3 and 2/3 queries may seem mathematically equivalent if the negation of a feature is allowed (one could point out that two are ``not wearing green scarves''). However, research in psychology does not find this to be the case for human responses, where similarity is assumed to be based on the most common positive features that examples share (see, e.g., Tversky's theory of similariteies \cite{tversky77}). Proposition \ref{prop:lr} shows that there are data sets where larger arbitrarily large query sizes are necessary to elicit certain features.

The paper is organized as follows. After discussing related work, we define the hierarchical model in Section 2. In Section 3, we define the adaptive triple algorithm and the baseline (non-adaptive) random triple algorithm. In Section 4, we bound the number of queries of these algorithms in the case of hierarchical features. The performance under independent features are analyzed in Section 5. Section 6 considers alternative types of queries. Experimental results are presented in Section 7.

\section{Related work}

In machine learning and AI applications \cite{ParikhG11}, relevant features are often elicited from domain experts \cite{farhadi2009describing, BMVC.23.2} or from text mining \cite{berg2010automatic}. As mentioned, a common approach for crowdsourcing named features is image tagging, see, e.g., the ESP game \cite{VonAhn2004}. There is much work on automatic representation learning and feature selection from the data alone (see, e.g., \cite{bengio2013representation}), but these literatures are too large to summarize here.

One work that inspired our project was that of Patterson and Hays \cite{Patterson2012SunAttributes}, who crowdsourced nameable attributes for the SUN Database of images using comparative queries. They presented workers with random quadruples of images from a data set separated vertically and elicited features by asking what distinguishes the left pair from the right. Their images were chosen randomly and hence without adaptation. They repeated this task over 6,000 times. We discuss such left-right queries in Section \ref{sec:queries}.

For supervised learning, Parikh and Grauman \cite{ParikhG11} address multi-class classification by identifying features that are both nameable and machine-approximable. They introduce a novel computer vision algorithm to predict ``namability'' of various directions in high-dimensional space and present users with images ordered by that direction. Like ours, their algorithm adapts over time, though their underlying problem and approach are quite different. In independent work on crowdsourcing binary classification, Cheng and Bernstein \cite{flock} elicit features by showing workers a random pair of positive and negative example. They cluster the features using statistical text analysis which reduces redundant labeling of similar features (which our algorithm does through adaptation), but it does not solve the problem that a large number of random comparisons are required in order to elicit fine-grained features. They also introduce techniques to improve the feature terminology and clarify feature definitions, which could be incorporated into our work as well.

Finally, crowdsourced feature discovery is a human-in-the-loop form of unsupervised dictionary learning (see, e.g., \cite{sparseCoding}).  Analogous to the various non-featural representations of data, crowdsourcing other representations has also been studied. For hierarchical clustering, a number of algorithms have been proposed (see, e.g., Chilton {\em et al} \cite{chilton2013cascade}). Also, Kernel-based similarity representations have been crowdsourced adaptively as well \cite{tamuz2011adaptively}.

\section{Preliminaries and Definitions\label{sec:prelim}}

We first assume that there is a given set $X = \{ x_1, x_2, \ldots, x_N \}$ of examples (images, pieces of text or music, etc.) and an unknown set $\mathcal{F}=\{f_1,f_2,\ldots f_M\}$ of binary features $f_j:X \rightarrow \{0,1\}$ to be discovered. We say that feature $f_j$ is {\em present} in an example $x_i \in X$ if $f_j(x)=1$, {\em absent} if $f_j(x)=0$, and we abuse notation and write $\x{i}{f}\equiv f(x_i)$ and $\x{i}{j}\equiv f_j(x_i)$. Hence, since there are $M$ hidden features and $N$ examples, then there is an underlying latent $N$-by-$M$ feature allocation matrix $A$ with binary entries. The $i$th row of $A$ corresponds to sample $x_i$, and the $j$th column of $A$ corresponds to feature $f_j$.

Our goal is to recover this entire matrix $A$, together with names for the features, using minimal human effort.
\begin{definition}
Given a feature $f$ and an example $x_i$, a \textbf{labeling query} $L(x_i, f)$ returns $f(x_i)$.
\end{definition}

As we will discuss, in practice labeling is performed more efficiently in batches. A consideration for query design is that we want each contrastive query to be as cognitively simple as possible for the human worker. Our analysis suggests that comparisons of size three suffice, but for completeness we define comparisons on pairs as well.

\begin{definition}
A \textbf{2/3 query} $Q(x, y, z)$ either returns a feature $f \in \mathcal{F}$ such that $f(x)+f(y)+f(z)=2$ or it returns \textit{NONE} if no such feature exists.

A \textbf{1/2 query} on $Q(x, y)$ either returns a feature $f \in \mathcal{F}$ such that $f(x)+f(y)=1$ or returns \textit{NONE} if $x$ and $y$ are identical.
\end{definition}
We also refer to 2/3 queries as {\em triple queries} and 1/2 queries as {\em pair queries}. Note that we can simulate a pair query $Q(x, y)$ by two triple queries $Q(x,x,y)$ and $Q(x, y, y)$. We say that a feature $f$ \textbf{distinguishes} a set of examples $S$ if $\sum_{x \in S} f(x)=|S|-1$, i.e., it holds for all but one example in $S$.

\begin{figure} % This figure is designed to fit on a single column
\includegraphics[width=3in]{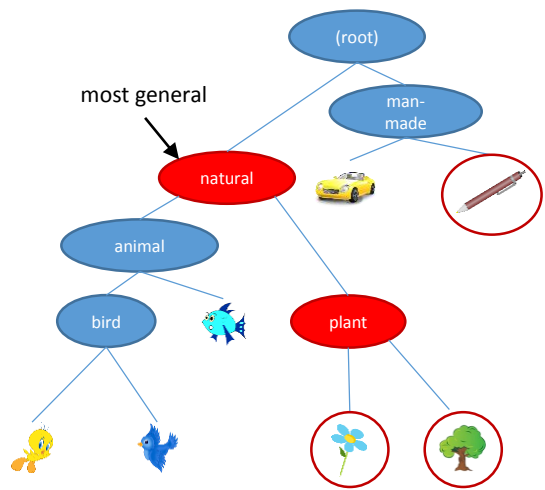}
\caption{A sample proper binary feature tree. When comparing the pen, flower, and tree, the distinguishing features are {\em natural} and {\em plant}. A generalist would respond with {\em natural}. \label{fig:taxonomy}}
\end{figure}

\begin{definition}
A query is \textbf{resolved} if there is a known distinguishing feature for the query, or it is known that \textit{NONE} is the outcome of the query.
\end{definition}

Algorithm~\ref{alg:adaptrip}, the Adaptive Triple Algorithm, is the main algorithm we use for experimentation and analysis (though we also analyze a more advanced algorithm.

\section{Hierarchical Feature Models\label{sec:hier}}
We now consider the setting where the features and examples form a tree, with each internal node (other than the root) corresponding to a single feature and each leaf corresponding to a single example. The features that are 1 for an example are defined to be those on the path to the root, and the others are 0. The root is not considered a feature. Hence, if feature $f$ is an ancestor of $g$, then $g\leq f$ in that whenever $g$ is 1, $f$ must be 1 as well.

\begin{algorithm}
\begin{algorithmic}[1]
\REQUIRE Examples $X = \{x_i\}$.
\ENSURE A set of features $F=\{f\}$ and their corresponding labels on all examples $\x{i}{f}$ for $i \leq N, f \in F$.
\STATE Randomly select a triple $\{x,y,z\}$ from the set of all unresolved triple queries. Let $f = Q(x,y,z)$.
\STATE If $f\neq $NONE: (a) add it to $F$, (b) run the labeling query $L(x_i, f)$ for all $x_i \in X$, and (c) update the set of unresolved queries.
\STATE If all all triples of examples can be resolved by one of the discovered features, terminate and output $F$ and the labels. Otherwise, go to 1.
\end{algorithmic}
\caption{Adaptive Triple}\label{alg:adaptrip}
\end{algorithm}

\begin{definition}
A {\em feature tree} $T$ is a rooted tree in which each internal node (aside from the root) corresponds to a distinct feature and each leaf corresponds to a distinct example. The value of a feature on an example is 1 if the node corresponding to that feature is on the the path to the root from the leaf corresponding to the example, and 0 otherwise.
\end{definition}
Note that our algorithms recover the features but not the tree explicitly -- reconstructing the corresponding feature tree is straightforward if the data is consistent with one.

\subsection{Binary feature trees}

In this section, we consider the standard notion of {\em proper binary trees} in which each internal node has exactly two children. Figure \ref{fig:taxonomy} illustrates a proper binary feature tree.
\begin{proposition}
For a proper binary feature tree on $M$ features, the Adaptive Triple algorithm finds all features in $M$ queries. \label{prop:binadaptive}
\end{proposition}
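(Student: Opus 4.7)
The plan rests on three structural observations about 2/3 queries on a proper binary feature tree $T$ that together imply each iteration of Algorithm~\ref{alg:adaptrip} discovers exactly one new feature and that the algorithm cannot terminate before all $M$ features have been collected.

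First I would check that a 2/3 query always returns a feature, never \textit{NONE}. For any three leaves $x, y, z$, let $v = \LCA(x, y, z)$. Since every internal node has exactly two children, the three leaves must split as $(2,1)$ between the two child-subtrees of $v$; otherwise $v$ would fail to be their joint LCA. The child $v_L$ whose subtree contains two of them cannot itself be a leaf, so it is a non-root internal node, hence a feature, and it takes value $1$ on those two leaves and $0$ on the third. As an immediate consequence, any feature returned when $Q$ is called on an unresolved triple must be new: ``unresolved'' means no $f \in F$ satisfies $f(x)+f(y)+f(z)=2$, while by definition any answer $f$ does satisfy that equation, so $f \notin F$.

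Next I would show that the algorithm cannot halt while any feature is still missing. Take any $u \in \calF \setminus F$; it corresponds to a non-root internal node with two children $a, b$ and a sibling $u'$ (which exists because $u$'s parent is also a proper binary internal node). Choose $x$ under $a$, $y$ under $b$, and $z$ under $u'$; each of these subtrees contains at least one leaf, so such a choice exists. Then $\LCA(x, y) = u$ while $\LCA(x, y, z)$ is the parent of $u$, and the features of the form ``ancestor of both $x$ and $y$ but not of $z$'' form the singleton $\{u\}$, because the strict ancestors of $u$ all lie above the parent of $u$ and hence are also ancestors of $z$. Therefore $u$ is the unique feature distinguishing $\{x, y, z\}$, and the triple is unresolved as long as $u \notin F$.

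Putting the pieces together, each iteration strictly enlarges $F$ by one previously unknown feature, and the termination test fails until $F = \calF$; thus the algorithm halts after exactly $M$ queries. I expect the main subtlety to be the uniqueness step in the third observation, which is where the proper-binary property is genuinely used to collapse the ``path of distinguishing features'' for the constructed triple down to the single node $u$; once that is nailed, the counting is immediate.
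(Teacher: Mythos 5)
Your proposal is correct and follows essentially the same route as the paper's proof: the LCA argument showing a 2/3 query on a proper binary tree never returns \textit{NONE}, the construction of a triple ($x$ under one child, $y$ under the other, $z$ under the sibling) for which a given feature is the unique distinguishing feature, and the resulting count of exactly $M$ queries. Your added care about the two-leaf child not itself being a leaf and the uniqueness of the distinguishing feature only makes explicit what the paper leaves implicit.
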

\begin{proof}

To prove this proposition, we will show that: (a) we never receive a \textit{NONE} response in the Adaptive Triple Algorithm, and (b) every feature has at least one triple for which it is the unique distinguishing feature. Since a query in this algorithm cannot return an already discovered feature, and since there are $M$ features, this implies that there must be exactly $M$ queries.

For (a), let $f$ be the least common ancestor of an example triple $\{x,y,z\}$. Since $T$ is proper, $f$ must have exactly two children. By the definition of least common ancestor, two out of $\{x,y,z\}$ must be beneath one child of $f$ (call this child $g$) while the other one is beneath the other child. Then $g$ is a distinguishing feature for $Q(x,y,z)$. Hence, we should never receive a NONE response. 

For (b), observe that every internal node (other than the root) has at least one triple for which it is the {\em unique} distinguishing feature. In particular, given any internal node, $f$, let $l$ and $r$ be its left and right children. Let $x$ and $y$ be examples under $l$ and $r$ (with possibly $x=l$ or $y=r$ if $l$ or $r$ are leaves).  Let $s$ be the sibling of $f$ (the other child of its parent) and let $z$ be any leaf under of $s$ (again $z=s$ if $s$ is a leaf). Then it is clear that $f$ is the unique distinguishing triple for $x$, $y$, and $z$. For example, in Figure \ref{fig:taxonomy}, for the feature {\em plant}, a triple such as the flower, tree, and fish, would uniquely be distinguished by {\em plant}.
\end{proof}

Now consider different ways to answer queries: define a {\em generalist} as an oracle for $Q$ that responds to any query with the shallowest distinguishing feature, i.e., the one closest to the root. For example, given the pen, flower and tree of Figure \ref{fig:taxonomy}, the generalist would point out that the flower and tree are both {\em natural} rather than that they are both {\em plants}. Also, say an algorithm is {\em non-adaptive} if it specifies its queries in advance, i.e., the triples cannot depend on the answers to previous queries but could be random.
We also assume that the data is {\em anonymous} which means that we can think of the specific examples being randomly permuted in secret before being given to the algorithm.

We now show that any general-purpose non-adaptive algorithm that does not exploit any content information on the examples requires at least $\Omega(M^2)$ examples to find all $M$ features and at least $\Omega(M^3)$ if all queries are answered by generalists.

\begin{proposition}
If the examples correspond to a random permutation of the leaves of a proper binary tree $T$ with $M$ features, then any non-adaptive algorithm requires at least $M^2/12$ queries to recover all $M$ features with probability 1/2. Furthermore, if queries are answered by generalists, then any non-adaptive algorithm requires at least $M^3/24$ queries to find all features with probability 1/2.
\end{proposition}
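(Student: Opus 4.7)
The plan is to exhibit a single ``worst-case'' proper binary tree whose deepest feature is witnessed by only a tiny fraction of the triples, and then exploit the anonymity assumption to reduce a non-adaptive algorithm's queries to uniformly random triples, against which a union bound suffices. Concretely, I would take $T_M$ to be a \emph{caterpillar}: the root has a leaf child $\ell_0$ and an internal child $v_1$; for $1 \le i < M$, internal node $v_i$ has a leaf child $\ell_i$ and an internal child $v_{i+1}$; and finally $v_M$ has two leaf children $a$ and $b$. This is a proper binary tree with features $\{v_1,\ldots,v_M\}$ and exactly $M+2$ leaves.

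The structural fact I would establish first is that the deepest feature $v_M$ equals $1$ on only the two leaves $a$ and $b$, so $v_M$ distinguishes a triple $\{x,y,z\}$ if and only if $\{a,b\}\subset\{x,y,z\}$. This yields exactly $M$ ``witnessing'' triples (one per choice of the third leaf). Since the only mechanism by which the algorithm can output the name of $v_M$ is to receive $v_M$ as the response to some query---the labels of previously discovered features never produce the string name of $v_M$---the algorithm must issue at least one witnessing triple in order to succeed.

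By anonymity, each fixed triple the algorithm prepares becomes, under the secret random permutation of leaves, a uniformly random $3$-subset of $M+2$ leaves, so it is witnessing with probability $M/\binom{M+2}{3}\le 6/M^2$. A union bound over $Q$ queries caps the overall hit probability by $6Q/M^2$, and requiring it to be at least $1/2$ forces $Q\ge M^2/12$. For the generalist bound I would further compute the distinguishing features of a witnessing triple $\{a,b,\ell_k\}$: since $v_i(a)=v_i(b)=1$ for all $i$ while $v_i(\ell_k)=1$ iff $i\le k$, the distinguishing features are precisely $v_{k+1},\ldots,v_M$, so the shallowest---and hence the generalist's response---is $v_{k+1}$. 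Thus $v_M$ is returned only on the single triple $\{a,b,\ell_{M-1}\}$; rerunning the union bound with one target out of $\binom{M+2}{3}$ triples yields the claimed $\Omega(M^3)$ bound.

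The main obstacle I anticipate is ruling out indirect discovery: one must verify that even if the algorithm eventually learns the labels of $v_1,\ldots,v_{M-1}$ on every example (and can therefore identify $a$ and $b$ as the unique pair for which all those labels equal $1$), this gives it no way to emit the string feature name $v_M$ without a query that actually returns $v_M$. I would state this formally as a short ``name-access'' observation grounded in the fact that feature names are produced only by $Q$ responses; after that, the probabilistic counting is essentially routine.
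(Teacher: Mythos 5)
Your proposal is correct and follows essentially the same argument as the paper: both hinge on the fact that a deepest feature has two leaf children, so eliciting it requires a triple containing both of those leaves (and, for a generalist, additionally a leaf under its sibling), followed by a union bound over uniformly random triples, yielding the same $6/M^2$ and $O(1/M^3)$ per-query hit probabilities. The only difference is one of scope: you instantiate a specific caterpillar tree, whereas the proposition is stated for an arbitrary proper binary tree $T$ and the paper runs the identical argument on the deepest feature of whatever $T$ is given --- your argument transfers verbatim to that setting, and your explicit ``name-access'' observation makes precise a step the paper leaves implicit.
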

Figure \ref{fig:taxonomy} sheds light on this proposition -- in order to discover the feature bird, we mush choose both birds in a triple. If the queries are answered by a generalist, we would have to choose the birds and fish. The probability of choosing two specific examples is $O(1/M^2)$ while the probability of choosing three specific examples is $O(1/M^3)$.
\begin{proof}
Let $f$ be the deepest feature (or one of them if there are more than one). Let $f$ have children $x$ and $y$ which must be leaves since $f$ is a deepest internal node. Let $s$ be the sibling of $f$.  By assumption $x$ and $y$ are leaves. Now, in order to discover $f$, the triple must consist of $x$ and $y$ and another node, which happens with probability $(N-2)/{N \choose 3}=\frac{6}{N(N-1)}< 6/M^2$ for a random triple (since $N=M+2$). By the union bound, if there are only $M^2/12$ triples, it will fail to discover $f$ with probability at least $1/2$.

Now consider a generalist answering queries. Let $S$ be the set of leaves under $s$. Since $f$ is the deepest feature, $S$ must be a set of size 1 or 2 depending on whether or not $s$ is a leaf. It is not difficult to see that the only triples that return $f$ (for a generalist) are $x,y$ and an element of $S$. Hence there are at most 2 triples that recover $f$. Since there are ${N \choose 3}> M^3/6$ triples, if there are fewer than $M^3/24$ triples, then the probability that any one of them is equal to one of the two target triples is at most 1/2. The union bound completes the proof.
\end{proof}

Note that pairs are insufficient to recover internal nodes in the case where a specifist answers queries. This motivates the need for triples; moreover, Proposition~\ref{prop:binadaptive} shows that triple queries suffice to discover all the features in a binary feature tree.

\begin{algorithm}[t!]
\begin{algorithmic}[1]
\REQUIRE Examples $X = \{x_i\}$ and an exploration parameter $\theta$.
\ENSURE The set of features $F$ and labels for all examples $\x{i}{f}$.
\STATE Query pairs of examples until we have, for each pair, found a feature that distinguishes them, or determined that they have identical features (by direct comparison or transitivity).
\STATE Maintain a queue $Q$ of features to explore, and a queue of already discovered features $F$. Initialize $Q = \{r\}$, where $r$ is a default \textit{root} feature defined as: $x_{ir} = 1, \forall i \in X$. Initialize $F = \{\}$.
\WHILE{Queue $Q$ is not empty}
\STATE Pop a feature $f$ from $Q$. Set $\mbox{off}(f) = \{f_j \mbox{ s.t. } \not\exists f' \mbox{ with } f_j < f' < f \}$. Represent each feature $f_j$ in  $\mbox{off}(f)$ by a randomly selected example $x_j$ such that $x_{j,f_j} = 1$.
\STATE Uniformly randomly select distinct examples $x, y, z \in \mbox{off}(f)$, and query $\{x,y, z\}$. If the query returns a feature $f'$, push $f'$ to $Q$, run labeling queries $\{x, f'\}$ for all $x \in \mbox{off}(f)$ and update $\mbox{off}(f)$.
\STATE If Step 5 returns $\theta$ consecutive \textit{NONE}s, then add $f$ to $F$ and go to Step 4 and pop the next feature from the $Q$.
\ENDWHILE
\RETURN $F$ and the labels $\x{i}{f}$.

\end{algorithmic}
\caption{Adaptive Hybrid}\label{alg:adaphybrid}
\end{algorithm}

\subsection{General feature trees}
We now present a theoretical algorithm using triple queries which allows us to efficiently learn general ``$D$-ary leafy feature trees,'' which we define to be a feature tree in which: (a) every internal node (i.e., feature) has at most $D$ internal nodes (but arbitrarily many leaves) as children, and (b) no internal node has a single child which is an internal node. Condition (a) is simply a generalization of the standard branching factor of a rooted tree, and condition (b) rules out any ``redundant'' features, i.e., features which take the same value for each example.

\begin{proposition} [Adaptive Hybrid, Upper Bound]
Let $T$ be a $D$-ary leafy feature tree with $N$ examples and $M$ features. The Adaptive Hybrid algorithm with exploration time $\theta = 3D^2 \log \frac{M}{\delta}$ terminates after $O(N + MD^2\log \frac{M}{\delta})$ number of triple queries and finds all features with probability $\geq 1-\delta$.
\label{lem:Dary}
\end{proposition}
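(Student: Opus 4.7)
The plan is to decompose the analysis into (i) the pair-query initialization in Step~1 and (ii) the main exploration loop in Steps~3--7, bounding the number of triple queries and the success probability for each and then combining via a union bound. For Step~1, I would argue that $O(N)$ pair queries suffice, since each pair query either discovers a new feature (at most $M$ times) or identifies two examples as belonging to the same equivalence class by transitivity (at most $N$ times); simulating each pair query by two triple queries as in Section~\ref{sec:prelim} yields $O(N)$ triple queries (absorbing the $O(M)$ term into the main-loop bound).

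For a single iteration of Steps~4--6 on a feature $f$, the key structural observation is that every non-\textit{NONE} response is automatically a new feature. If the returned $f'$ already lay in $F$, then $f'$ would be an already-known intermediate strictly between two of the chosen representatives and $f$, contradicting their membership in $\mbox{off}(f)$. Hence no productive response is wasted, and at most $M$ productive queries occur across the entire run. The second ingredient is a probability lower bound on productivity: partition $\mbox{off}(f)$ into groups $S_1,\ldots,S_k$ by the true immediate internal child of $f$ that each element lies below, with $k\le D$ from the $D$-ary hypothesis. If $\mbox{off}(f)$ is not already the clean set of true immediate children, the leafy condition forces some $S_i$ to contain $|S_i|\ge 2$ elements (otherwise the sole descendant of $f_i$ in $\mbox{off}(f)$ would have to be $f_i$ itself, i.e., already discovered), and setting $m=|\mbox{off}(f)|$, $n_i=|S_i|$,
\[
\Pr[\text{triple returns a feature}] \;\ge\; \sum_i \frac{\binom{n_i}{2}(m-n_i)}{\binom{m}{3}} \;=\; \Omega(1/D^2),
\]
where the verification splits into $m\le D$ (direct from $n_i\ge 2$) and $m>D$ (pigeonhole forces some $n_i\ge m/D$).

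From this $\Omega(1/D^2)$ productivity lower bound the rest is standard. Setting $\theta = 3D^2\log(M/\delta)$, the probability of $\theta$ consecutive \textit{NONE}s while productivity remains $\Omega(1/D^2)$ is at most $(1-\Omega(1/D^2))^\theta \le \delta/M$, and a union bound over the at most $M$ features ever popped from $Q$ gives total failure probability $\le\delta$. Conditioning on no premature termination, the algorithm correctly exhausts each $\mbox{off}(f)$ and therefore recovers every feature in $T$. For the query count, I charge at most $M$ productive triples (one per discovered feature) plus at most $\theta$ unproductive triples per processed feature, giving $O(MD^2\log(M/\delta))$ for the main loop and $O(N+MD^2\log(M/\delta))$ overall.

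The main obstacle I expect is the productivity lower bound, specifically the degenerate case in which $\mbox{off}(f)$ is entirely contained in one group $S_i$ so that the ``2-from-$S_i$, 1-outside'' argument degenerates; here I would use the leafy/$D$-ary structure of $f_i$'s subtree to show that within $S_i$ the returned feature is still some strict ancestor of two representatives but not the third, again necessarily a new feature, at a rate still $\Omega(1/D^2)$. A secondary care point is the accounting in Step~1: one must show that the feature-discovery events and equivalence-merging events in Step~1 can be scheduled so that the total pair-query count is $O(N)$ rather than $O(N^2)$, which relies on using previously discovered features to resolve pairs by transitivity rather than by fresh queries.
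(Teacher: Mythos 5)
Your overall skeleton matches the paper's: Step~1 costs $O(N)$ pair queries (at most $N-1$ \textit{NONE}s plus at most $M$ new leaf features), every non-\textit{NONE} response in the main loop is necessarily a new feature, a per-query productivity bound of $\Omega(1/D^2)$ combined with $\theta=3D^2\log(M/\delta)$ and a union bound over the at most $M$ popped features finishes the argument. The gap is in the productivity bound itself, which is the entire technical content of the result. Your lower bound
$\sum_i \binom{n_i}{2}(m-n_i)/\binom{m}{3}$
counts only triples that straddle the top-level partition of $\mbox{off}(f)$ in a $2$--$1$ pattern, and this quantity is \emph{not} $\Omega(1/D^2)$ in general. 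Take $D=2$, $k=2$, $n_1=m-1$, $n_2=1$ with $m$ large (e.g., the root of a deep binary tree whose two subtrees are very unbalanced, so that $\mbox{off}(f)$ initially contains all $m\gg D^2$ leaf features): the sum equals $\binom{m-1}{2}/\binom{m}{3}=3/m$, which tends to $0$. Your pigeonhole patch does not rescue this, since with $n_1\geq m/D$ but $m-n_1=1$ one gets only $\binom{n_1}{2}\cdot 1/\binom{m}{3}=\Theta(1/(D^2 m))\cdot\Theta(D^2)\cdot\ldots$ --- in any case a bound decaying in $m$. The triples you are discarding, namely those lying entirely inside one large group $S_i$, are exactly the ones that carry the probability mass in this regime, and showing they are productive requires recursing into the subtree below $f_i$ rather than stopping at the immediate children of $f$.

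This is precisely what the paper's Lemma~\ref{lem:Darybasic} does: it partitions \emph{all} triples by their lowest common ancestor $g$ ranging over every feature in the subtree, shows that for ``heavy'' $g$ (some child of $g$ carrying at least two representatives) a $\geq 1/(2D^2)$ fraction of the triples with LCA $g$ are productive, and for ``light'' $g$ (every child carrying one representative) charges the $\binom{k}{3}$ unproductive triples to the disjoint sets $S_{g,l}$ of productive triples formed by two descendants of $g$ together with a leaf outside $g$'s subtree, losing only a factor $D$. Summing the two cases gives the global $\geq 1/(3D^2)$ rate. The degenerate case you flag at the end (all of $\mbox{off}(f)$ under one child) is a symptom of the same omission, and your proposed fix --- asserting that within $S_i$ the returned feature is ``still a new feature at a rate $\Omega(1/D^2)$'' --- is exactly the recursive claim that needs proof; note also that in that fully degenerate configuration the child $f_i$ itself is $1$ on every representative and can never be returned by a 2/3 query on $\mbox{off}(f)$, so one must additionally argue (as one can, since each true immediate child of $f$ always retains at least one discovered leaf feature from Step~1) that this configuration does not arise.
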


The proof of Proposition~\ref{lem:Dary} makes use of the following Lemma.

\begin{lemma}
Let $T$ be a non-star, $D$-ary leafy feature tree. Then the Random Triple algorithm finds at least one feature with probability $\geq 1-\delta$ using $3D^2\log \frac{1}{\delta}$ queries.
\label{lem:Darybasic}
\end{lemma}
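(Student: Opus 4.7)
The strategy is to bound from below the probability $\rho$ that a single random triple returns a non-\textit{NONE} feature, and then conclude by independent trials: if $\rho \ge 1/(3D^2)$, then after $t = 3D^2\log(1/\delta)$ independent queries the failure probability is at most $(1-\rho)^t \le e^{-\rho t} \le \delta$. So the entire lemma reduces to the per-query bound.

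To prove $\rho \ge 1/(3D^2)$, I would exhibit a witness feature $f^* \in T$ whose subtree contains $m^*$ of the $N$ examples, with $m^* \in [N/(2D),\, N/2]$. Every triple that places exactly two examples inside $f^*$'s subtree and the third outside is guaranteed to be resolved ($f^*$ itself distinguishes it, or some distinguishing descendant does), so
\[
\rho \;\ge\; \frac{\binom{m^*}{2}(N-m^*)}{\binom{N}{3}} \;\approx\; 3\Big(\frac{m^*}{N}\Big)^2 \Big(1 - \frac{m^*}{N}\Big) \;\ge\; 3\cdot\frac{1}{(2D)^2}\cdot\frac{1}{2} \;=\; \frac{3}{8D^2} \;>\; \frac{1}{3D^2},
\]
where the penultimate step uses $m^*/N \ge 1/(2D)$ and $1 - m^*/N \ge 1/2$.

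To produce such an $f^*$, I would greedily descend from the root. Maintain a current node $v$ with subtree size $m_v > N/2$; if $v$ has an internal-node child $c$ with $m_c > N/2$, descend $v \leftarrow c$; otherwise output $f^*$ as the largest internal-node child of $v$. The non-star hypothesis guarantees that the descent terminates at a node with at least one internal-node child, so $f^*$ is a genuine feature. The $D$-ary condition then forces, via pigeonhole over the at most $D$ internal-child subtree sizes at $v$, that $m^* \ge (m_v - \ell_v)/D$, where $\ell_v$ counts the direct leaf children of $v$.

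The main obstacle is the \emph{leafy} aspect: $\ell_v$ is unrestricted, so a single greedy step can shed almost all of $v$'s mass into direct leaf children, leaving the pigeonhole-derived $m^*$ too small relative to $N$. I plan to handle this by a complementary-mass case split: either the descent reaches a node where the internal-child subtrees collectively retain at least $N/2$ of the mass (so pigeonhole yields $m^* \ge N/(2D)$ as required), or the accumulated direct-leaf shedding along the descent path exceeds $N/2$; in the latter case, the most recently descended-into feature---a non-root feature, since the descent has taken at least one step---itself has subtree size $\ge N/2$ and complement of size $\ge N/2$, so it already sits in the balanced band and can serve as $f^*$. Combining these two cases with the triple-counting inequality above establishes $\rho \ge 1/(3D^2)$ uniformly over non-star $D$-ary leafy trees, completing the lemma.
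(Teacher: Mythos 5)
Your outer loop matches the paper's exactly: reduce everything to a per-query success probability $\rho \ge 1/(3D^2)$ and finish with $(1-\rho)^t \le e^{-\rho t} \le \delta$. The gap is in the per-query bound, specifically in the existence of the balanced witness $f^*$ with $m^* \in [N/(2D),\,N/2]$. The fallback branch of your case split is internally inconsistent: you only ever descend into a child $c$ because $m_c > N/2$, so the ``most recently descended-into feature'' has complement of size $N - m_c < N/2$, not $\ge N/2$ as you assert; it does not sit in the balanced band, and the count of triples straddling it can be far below $\binom{N}{3}/(3D^2)$. More fundamentally, no feature in the band need exist at all: since a node of a leafy tree may have arbitrarily many direct leaf children, the subtree masses can jump from above $N/2$ straight to below $N/(2D)$ (take a node of mass greater than $N/2$ whose mass sits almost entirely in direct leaf children, each of its at most $D$ internal children holding only a constant number of examples). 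In such trees every individual feature $f$ has $3(m_f/N)^2(1-m_f/N)$ well below $1/(3D^2)$, so no single-witness count can deliver the bound; the productive triples are plentiful only in aggregate, across many features simultaneously.

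That aggregation is precisely what the paper's proof does instead. It partitions all triples by their least common ancestor, writes the number of productive triples as $\sum_f \sum_{(x,y,z)\in L_f} I(x,y,z)$, and handles each feature $f$ locally: if the largest child of $f$ carries at least two examples (``heavy''), a $1/(2D^2)$ fraction of $L_f$ is productive outright; otherwise (``light'') it charges $L_f$ to a disjoint set $S_{f,l}$ of productive triples with $|S_{f,l}| \ge |L_f|/D$, built from an external leaf $l$ whose existence is exactly what the non-star hypothesis provides. Summing the two cases yields the global $1/(3D^2)$ fraction without ever exhibiting a single balanced feature. (The paper also disposes of the unbounded-leaf-children issue you were wrestling with by a reduction rather than a case analysis: after the pair-query phase of Adaptive Hybrid each feature is represented by one example, so every node effectively has at most $D$ children; some such reduction is needed for your argument as well, since otherwise even the lemma's conclusion can fail.) To repair your proof you would need to replace the single witness with a charging argument of this kind.
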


Due to space limitations, the proofs are deferred to the appendix.

\section{Independent features}\label{sec:indep}

In this section we consider examples drawn from a distribution in which different features are independent. Consider a statistical model in which there is a product distribution $D$ over a large set of examples $\mathcal{X}$. This model is used to represent features that are independent of one other. An example of two independent features in the Faces data set might be ``Smiling'' and ``Wearing Glasses.'' We assume that $D$ is a product distribution over $M$ independent features. Thus $D$ can be described by a vector $\{ p_f, f \in \calF \}$, where for any feature $f \in \calF$, $p_f = \Pr_{x \sim D} \bigl[f(x) = 1\bigr]$. We also abuse notation and write $p_i$ for $p_{f_i}$. We assume $0<p_i<1$.

In this model, there is a concern about how much data is required to recover all the features. In fact, for certain features there might not even be any triples among the data which elicit them. To see this, consider a {\em homogenous crowd} that all answers queries according to a fixed order on features. Formally, if more than one feature distinguishes a triple, suppose the feature that is given is always the distinguishing feature $f_i$ of smallest index $i$. Intuitively, this models a situation where features are represented in decreasing salience, i.e., differences in the first feature (like gender) are significantly more salient than any other feature, differences in the second feature stand out more than any feature other than the first, and so forth. Now, also suppose that all features have probability 1/2 of being positive.
\begin{lemma}
If $p_1=p_2=\cdots=p_M=1/2$, then with a homogeneous crowd, $N\geq 1.1^M$ examples are required to find all features with probability $1/2$ even if all triples are queried.
\end{lemma}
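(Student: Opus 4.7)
The plan is to lower-bound the sample complexity by focusing on the hardest-to-discover feature, $f_M$. Under the homogeneous crowd, $f_M$ is returned on a triple $\{x,y,z\}$ only if (i) $f_j(x)=f_j(y)=f_j(z)$ for every $j < M$ (otherwise some smaller-indexed feature preempts it), and (ii) $f_M$ itself distinguishes the triple. Call such a triple \emph{revealing}. Since the algorithm queries every triple, $f_M$ is found iff at least one revealing triple exists among the $N$ i.i.d.\ samples, so the success probability is bounded by the probability that some revealing triple exists.

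Next I would bound this probability by Markov's inequality after computing the expected number of revealing triples. Because the examples are i.i.d.\ from a product distribution with each $p_i = 1/2$, for any single feature the probability that it takes the same value on three independent examples is $2\cdot(1/2)^3 = 1/4$, and by independence across features the probability that all of $f_1,\dots,f_{M-1}$ agree on a random triple is $4^{-(M-1)}$. Independently of the earlier features, $f_M$ distinguishes a random triple with probability $3/4$. Hence
\begin{equation*}
\E\bigl[\#\text{revealing triples}\bigr] = \binom{N}{3} \cdot \frac{3}{4} \cdot 4^{-(M-1)}.
\end{equation*}
Setting $N = 1.1^M$ makes the right-hand side at most $\tfrac{1}{8}\bigl(\tfrac{1.331}{4}\bigr)^{M}$ up to lower-order factors, which is well below $1/2$ for all $M \geq 1$; so by Markov, $f_M$ is missed, and all $M$ features are therefore not recovered, with probability at least $1/2$.

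There is no real technical obstacle here; the only step requiring care is the characterization of revealing triples, which relies on the defining property of the homogeneous crowd that it always returns the distinguishing feature of smallest index. The constant $1.1$ is a deliberately loose form of the much tighter threshold $\approx 4^{M/3} \approx 1.587^M$ that the same expectation calculation yields, and matches the flavor of the companion $\Omega(1.6^M)$ lower bound mentioned in the introduction for the random-triple algorithm.
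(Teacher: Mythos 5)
Your overall strategy (a first-moment bound on the event that the least salient feature $f_M$ is ever the answer to some queried triple) is exactly the paper's, but your characterization of a ``revealing'' triple is wrong, and the error points in the unsafe direction for a lower bound. Under the homogeneous crowd, a feature $f_j$ with $j<M$ preempts $f_M$ on $\{x,y,z\}$ only when $f_j$ \emph{distinguishes} the triple, i.e., $f_j(x)+f_j(y)+f_j(z)=2$; a triple on which some $f_j$ has sum $1$ is still one on which $f_M$ can be returned, since such an $f_j$ is not a valid answer to a 2/3 query. So the correct condition (i) is ``no $f_j$ with $j<M$ distinguishes the triple,'' which for $p_j=1/2$ has probability $5/8$ per feature, not your ``$f_j$ is constant on the triple,'' which has probability $1/4$. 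Your set of revealing triples is therefore a strict subset of the true one, your expected count $\binom{N}{3}\cdot\frac{3}{4}\cdot 4^{-(M-1)}$ underestimates the true expectation, and Markov's inequality applied to it does not bound the probability that $f_M$ is ever elicited. (Separately, the probability that $f_M$ itself distinguishes a random triple is $\binom{3}{2}(1/2)^3=3/8$, not $3/4$; that error is in the harmless direction but suggests you were using the ``one-versus-two in either direction'' notion of distinguishing rather than the paper's.)

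The fix is immediate and recovers the paper's proof verbatim: the per-triple probability that the homogeneous crowd returns $f_M$ is $(3/8)(5/8)^{M-1}$, and a union bound over the at most $N^3<1.1^{3M}=1.331^M$ triples gives $(3/8)(5/8)^{M-1}\cdot 1.331^M=\frac{3}{5}\left(\frac{5}{8}\cdot 1.331\right)^M<1/2$. Note also that with the corrected probabilities the threshold this computation actually supports is roughly $(8/5)^{M/3}\approx 1.17^M$, so the constant $1.1$ is much closer to tight than your concluding estimate of $\approx 1.587^M$ suggests; the $\Omega(1.6^M)$ figure from the introduction refers to the \emph{query} complexity of the non-adaptive algorithm (Lemma~\ref{lem:random}), not to this sample-size bound.
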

\begin{proof}
Since $p_i=1/2$, the probability of any feature distinguishing a triple is $3/8$. Therefore, a homogenous crowd will only output the last, least salient feature if it the only distinguishing feature, which happens with exponentially small probability $(3/8) (5/8)^{M-1}$ for a random triple. Given $N<1.1^M$ examples, there $N^3<1.1^{3M}$ triples. By the union bound, with probability less than $(3/8) (5/8)^{M-1} 1.1^{3M} < 1/2$ will any of them elicit the last feature.
\end{proof}
On the other hand, we show that all features will be discovered with a finite number of samples. In particular, say a feature $f$ is {\em identifiable} on a data set if there exists a triple such that $f$ is the unique distinguishing feature. If it is identifiable, then of course the adaptive triple algorithm will eventually identify it. We now argue that, given sufficiently many examples, all features will be identifiable with high probability.
\begin{lemma}[Identifiability in the Independent Features Model]
Suppose $N$ examples are drawn iid from the Independent Features Model where feature $f$ has frequency $p_f$. For any feature $f$, let:
\[ \quad \tau_f = 3p_f^2 (1 - p_f) \prod_{g \neq f} \left(1 - p_g^2 (1 - p_g)\right). \]
Moreover, let $\tau_{\min} = \min_f \tau_f$.  If $N \geq \Omega(\log (1/\tau_{\min})/\tau_{\min})$, then, with constant probability, all features are identifiable by triple queries.
\label{lem:independent1}
\end{lemma}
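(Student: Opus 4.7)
The plan is to compute, for each feature $f$, the probability $q_f$ that a single iid triple of examples is uniquely distinguished by $f$, argue that $q_f=\Omega(\tau_f)$, and then apply a Chernoff bound over disjoint triples together with a union bound over features.

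By the product-distribution assumption, conditioning on the three values of $f$ in a random triple leaves the values of any other feature $g$ distributed as three iid Bernoulli$(p_g)$ random variables, so
\[
q_f \;=\; 3\,p_f^2(1-p_f)\,\prod_{g\neq f}\bigl(1 - 3\,p_g^2(1-p_g)\bigr).
\]
Since $p_g^2(1-p_g)\le 4/27$ for every $g$, each factor $1 - 3p_g^2(1-p_g)$ is bounded away from $0$, and a short calculation shows that $q_f$ matches $\tau_f$ up to an absolute constant which can be absorbed into the hidden constant of the $\Omega(\cdot)$ in the hypothesis.

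Next I would partition the $N$ examples into $\lfloor N/3\rfloor$ disjoint iid triples. The events ``triple $t$ is uniquely distinguished by $f$'' are then mutually independent across $t$, so for a fixed $f$ the probability that none of these triples uniquely distinguishes $f$ is at most $(1 - q_f)^{\lfloor N/3 \rfloor}\le \exp(-c\,N\,\tau_f)$ for an absolute constant $c>0$. A union bound over the $M$ features gives failure probability at most $M\exp(-c\,N\,\tau_{\min})$. To replace $\log M$ by $\log(1/\tau_{\min})$, I would observe that for a single random triple the events ``uniquely distinguished by $f$'' across different $f$ are mutually exclusive (a triple has at most one unique distinguishing feature), so $\sum_f q_f\le 1$, hence $M\le 1/q_{\min}=O(1/\tau_{\min})$ and $\log M = O(\log(1/\tau_{\min}))$. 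Choosing $N=\Omega(\log(1/\tau_{\min})/\tau_{\min})$ makes the failure probability at most a constant, finishing the proof.

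The main obstacle is the comparison between the true per-triple probability $q_f$ and the quantity $\tau_f$ stated in the lemma: the product in $\tau_f$ contains $(1-p_g^2(1-p_g))$ while the true probability contains $(1-3p_g^2(1-p_g))$, and uniform boundedness of each $p_g^2(1-p_g)$ on $[0,4/27]$ must be tracked carefully through the product to extract an absolute constant ratio. Everything downstream is a textbook Chernoff-plus-union-bound argument, and the disjoint-triples trick sidesteps the dependencies among the $\binom{N}{3}$ overlapping triples at the cost of a small constant factor in $N$.
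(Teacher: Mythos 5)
Your overall strategy --- disjoint triples, the per-triple unique-identification probability, and a union bound over features using $\sum_f q_f \le 1$ to trade $\log M$ for $\log(1/q_{\min})$ --- is essentially the paper's own (very terse) argument: the paper computes the pair analogue $\Delta_f = 2p_f(1-p_f)\prod_{g\neq f}(1-2p_g(1-p_g))$, draws $N/2$ disjoint pairs, invokes a coupon-collector argument, and remarks that the triple case is ``very similar.'' Your formula $q_f = 3p_f^2(1-p_f)\prod_{g\neq f}\bigl(1-3p_g^2(1-p_g)\bigr)$ for the probability that a random triple is uniquely distinguished by $f$ is correct, and the mutual-exclusivity trick at the end is a clean way to finish.

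The gap is the claim that $q_f = \Omega(\tau_f)$ with an absolute constant. The ratio is
\[
\frac{q_f}{\tau_f} \;=\; \prod_{g\neq f}\frac{1-3p_g^2(1-p_g)}{1-p_g^2(1-p_g)},
\]
a product of $M-1$ factors each lying in $[15/23,\,1)$ (the minimum of each factor is attained at $p_g = 2/3$). A product of many factors that are individually bounded away from $0$ but also away from $1$ is not itself bounded away from $0$: if every $p_g = 2/3$ the ratio equals $(15/23)^{M-1}$, exponentially small in $M$, so the discrepancy cannot be absorbed into the hidden constant of the $\Omega(\cdot)$. Worse, for that parameter setting one checks that with $N = \Theta(\log(1/\tau_{\min})/\tau_{\min})$ the expected number of uniquely-identifying triples among all $\binom{N}{3}$ satisfies $\binom{N}{3}q_f \to 0$ (since $(27/23)^3\cdot\tfrac{5}{9} < 1$), so by the first-moment method the statement with the printed $\tau_f$ cannot be rescued by a cleverer argument. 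The honest resolution is that the lemma's $\tau_f$ appears to carry a typo: the factors in the product should be $1-3p_g^2(1-p_g)$, the exact triple analogue of the $1-2p_g(1-p_g)$ that the paper's own proof uses for pairs. With $\tau_f$ replaced by your $q_f$, the rest of your argument is complete; as written, the bridging step is false.
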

The above exponential upper and lower bounds are worst case. In fact, it is not difficult to see that for a totally {\em heterogeneous crowd}, which outputs a random distinguishing feature, if all $p_i=1/2$, only $N=O(\log M)$ examples would suffice to discover all features because one could query multiple different people about each triple until one discovered all distinguishing features. Of course, in reality one would not expect a crowd to be completely homogeneous nor completely heterogeneous (nor completely generalists nor completely specifists), and one would not expect features to be completely independent or completely hierarchical. Instead, we hope that our analysis of certain natural cases helps shed light on why and when adaptivity can significantly help.

As we now turn to the analysis of adaptivity and the number of queries, we make a ``big data'' assumption that we have an unbounded supply of examples. This makes the analysis simple in that the distribution over unresolved triples takes a nice form. We show that the number of queries required by the adaptive algorithm is linear in the number of features, while it grows exponentially with the number of features for any non-adaptive algorithm.

We first provide an upper bound on the number of queries of the Adaptive Triple algorithm in this model.
\begin{lemma}[Adaptive Triple]
Suppose for $j = 1, \ldots, k$, we have $M_j$ independent features with frequency $p_j$ and infinitely many examples. Then the expected number of queries used by Adaptive Triple  to discover all the features is at most $\sum_{j=1}^{k} \frac{M_j}{q_j}$,
%\[ \sum_{j=1}^{M} \min\left( \frac{d_j}{q_j}, d_j + \frac{1 - q_j}{q_j^2} \right), \]
where $q_j = 3p_j^2 (1 - p_j)$. For the Adaptive Pair algorithm, set $q_j = 2p_j(1 - p_j)$.
\label{lem:ubindependent}
\end{lemma}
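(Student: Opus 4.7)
The plan is to decompose the total query count $Q$ into the $M := \sum_j M_j$ epochs between consecutive new-feature discoveries, bound each epoch by a geometric random variable with a clean parameter, and then telescope pathwise to $\sum_j M_j/q_j$.

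Under the infinite-example assumption, I would identify each Adaptive Triple query with a fresh iid draw $(x,y,z) \sim D\times D \times D$, rejection-sampled until no already-discovered feature distinguishes it. Because $D$ is a product distribution over $\calF$, the distinguishing events $E_f := \{f(x)+f(y)+f(z)=2\}$ are mutually independent across $f \in \calF$, each Bernoulli$(q_f)$ with $q_f = 3 p_f^2 (1-p_f)$; for the Adaptive Pair variant one instead uses $q_f = 2 p_f (1-p_f)$. Rejection sampling conditions only on $\{E_f = 0\}$ for the discovered features, so by independence the distinguishing indicators of the \emph{undiscovered} features remain mutually independent Bernoulli$(q_f)$ in the conditional law.

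Let $U_i \subseteq \calF$ denote the set of undiscovered features after $i$ distinct discoveries, and let $Q_{i+1}$ be the number of queries made between the $i$th and $(i{+}1)$th discovery. An unresolved query either returns \textit{NONE}, with conditional probability $\prod_{f \in U_i}(1-q_f)$ by the independence above, or returns some feature in $U_i$, in which case $|U|$ drops by exactly one. Hence $Q_{i+1}\mid U_i$ is geometric with success probability $P(U_i) := 1 - \prod_{f \in U_i}(1-q_f)$, giving $\E[Q_{i+1}\mid U_i] = 1/P(U_i) \leq 1/\max_{f \in U_i} q_f$, using $1-\prod_{f \in U_i}(1-q_f) \geq \max_{f \in U_i} q_f$.

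The final step is a pathwise telescoping. Let $g_{i+1} \in U_i$ be the (random) feature actually discovered during epoch $i+1$. Since $g_{i+1}\in U_i$ we trivially have $\max_{f \in U_i} q_f \geq q_{g_{i+1}}$, so $1/P(U_i)\leq 1/q_{g_{i+1}}$ almost surely. Because $g_1,\ldots,g_M$ is exactly a permutation of $\calF$, summing over epochs gives $\sum_{i=0}^{M-1} 1/P(U_i) \leq \sum_{f \in \calF} 1/q_f = \sum_{j=1}^{k} M_j/q_j$ with probability one, and taking expectation yields $\E[Q] \leq \sum_{j=1}^k M_j/q_j$. The main obstacle is justifying the independence claim in the second paragraph: without the product structure of $D$, conditioning on ``not distinguished by any discovered feature'' could distort the distribution of the undiscovered distinguishing indicators and break the geometric form of each epoch; with that structure, the $\max$-versus-$q_{g_{i+1}}$ trick is essentially tautological.
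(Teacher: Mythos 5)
Your proof is correct and follows essentially the same route as the paper's: decompose the run into epochs between consecutive discoveries, observe that each epoch is geometric with success probability $1-\prod_{f\in U}(1-q_f)\geq \max_{f\in U}q_f$, and sum the reciprocals to get $\sum_j M_j/q_j$. Your pathwise telescoping over the realized discovery order $g_1,\dots,g_M$ (in place of the paper's assertion that the high-to-low-$q$ order is the worst case) and your explicit check that the product structure keeps the undiscovered features' distinguishing indicators independent after conditioning on unresolvedness are just more careful renderings of steps the paper leaves implicit.
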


We next provide lower bounds on the number of queries of any non-adaptive algorithm under the independent feature model.

\begin{lemma}[non-adaptive triple]
Suppose for $j = 1, \ldots, k$, we have $M_j$ independent features with frequency $p_j$ and infinitely many examples. Let $q_j = 3p_j^2 (1 - p_j)$. The expected number of queries made by any non-adaptive triple algorithm is at least:
\[ \frac{1 - q_{\max}}{ \prod_{i=1}^{k} (1 - q_i)^{M_i}}, \]
where $q_{\max} = \max_i q_i$. 
\label{lem:random}
\end{lemma}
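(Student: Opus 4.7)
The plan is to prove the bound by fixing an adversarial ``homogeneous'' crowd and lower-bounding the expected time before this crowd ever returns the feature with the largest $q$-value. Order the features by salience and let the oracle answer each triple with the most salient distinguishing feature; place a feature $f^*$ satisfying $q_{f^*}=q_{\max}$ at the very bottom of this order. Under this oracle a query returns $f^*$ only when $f^*$ is the \emph{unique} distinguishing feature of the triple, since otherwise the oracle has an alternative and prefers it.

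Because the features are independent under $D$ and the three examples in a triple are iid, the events ``$g$ distinguishes this triple'' are independent across features $g$. Hence the probability that $f^*$ is the only distinguisher of a random triple is
\[
r \;=\; q_{\max}\!\prod_{g\neq f^*}\!(1-q_g) \;=\; \frac{q_{\max}}{1-q_{\max}}\prod_{i=1}^{k}(1-q_i)^{M_i},
\]
where I factored the full product $\prod_g(1-q_g)=\prod_i(1-q_i)^{M_i}$ and peeled off the $(1-q_{\max})$ corresponding to $f^*$. Because a non-adaptive algorithm commits to its triples in advance, and in the big-data regime the examples filling those triples are iid from $D$, each query independently returns $f^*$ with probability $r$. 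The algorithm cannot terminate with all $M=\sum_j M_j$ features until $f^*$ has been returned, so its stopping time $T$ stochastically dominates a Geometric($r$) variable and
\[
\E[T] \;\geq\; \frac{1}{r} \;=\; \frac{1-q_{\max}}{q_{\max}\prod_i(1-q_i)^{M_i}} \;\geq\; \frac{1-q_{\max}}{\prod_i(1-q_i)^{M_i}},
\]
the last step using $q_{\max}\leq 1$.

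The main obstacle I expect is the ``iid queries'' step. The cleanest form of the argument supposes the algorithm consumes fresh examples for each triple (as it may, having infinite data); if it instead reuses examples across triples, a union bound---$\Pr[T\leq t]\leq tr$, so $\E[T]\geq 1/(2r)$---still implies the stated bound, because $q_{\max}=3p^2(1-p)\leq 4/9<1/2$ makes $1/(2r)$ exceed $(1-q_{\max})/\prod_i(1-q_i)^{M_i}$. The other delicate point is the legitimacy of the adversarial oracle: the definition of a 2/3 query leaves the choice among distinguishing features unspecified, so the ordered-homogeneous answerer described above is a valid instantiation, and exhibiting one such answerer is all a worst-case lower bound requires.
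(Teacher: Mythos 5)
Your proof is correct and follows essentially the same route as the paper's: fix a homogeneous (salience-ordered) crowd, note that the least-salient feature is returned only when it is the \emph{unique} distinguisher of a triple (probability $q_{f^*}\prod_{g\neq f^*}(1-q_g)$ by independence across features under the product distribution), and use a union bound over the non-adaptive algorithm's possibly overlapping triples to get the expected-time lower bound. The only difference is cosmetic: the paper places the feature of \emph{minimum} $q$ last in the salience order (which yields a slightly stronger bound), whereas you place the maximum-$q$ feature last; both choices imply the stated bound, and your writeup---including the explicit handling of dependence via $\Pr[T\leq t]\leq tr$ and the observation that $q_{\max}\leq 4/9<1/2$ absorbs the resulting factor of two---is considerably more careful than the paper's three-line sketch.
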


To interpret these results, consider the simple setting where all the features have the same probability: $p_j = p$. Then the random triple algorithm requires \textit{at least} $1/(1-q)^{M-1}$ queries on average to find all the features. This is exponential in the number of features, $M$. In contrast, the adaptive algorithm \textit{at most} $M/q$ queries on average to find all the features, which is only linear in the number of features.

%We use two kinds of queries for eliciting features -- contrastive queries and labeling queries. In a contrastive query, we select two subsets of examples $S_1, S_2 \subset X$ and ask a human to provide a feature that is in all of the examples in $S_1$ but not in any of the examples in $S_2$. We can choose the sizes of $S_1$ and $S_2$, though we prefer them to be small to make it cognitively easier for the human to answer the query. In particular, we can set $S_2 = \emptyset$ in which case the user returns a feature that is common across $S_1$.

\section{Other types of queries}\label{sec:queries}

Clearly 2/3 queries are not the only type of queries. For example, an alternative approach would use 1/3 queries in which one seeks a feature that distinguishes one of the examples from the other two. 
Such queries could result in features that are very specific to one image and fail to elicit higher-level salient features. 
%For instance, consider a triple of men with two wearing glasses and one wearing a green scarf. A common 2/3 answer is ``wearing glasses'' whereas a natural 1/3 answer could be ``wearing a green scarf.'' 
Under the hierarchical feature model, 1/3 queries alone are not guaranteed to discover all the features.

A natural generalization of the left-vs-right queries in previous work \cite{Patterson2012SunAttributes,flock} are queries with sets $L$ and $R$ of sizes $|L|\leq \ell, |R|\leq r$, where a valid answer is a feature common to all examples in $L$ and is in no examples in $R$. We refer to such a query as an $\ell-r$ query $L-R$. In fact, a 2/3 query on $\{x,y,z\}$ may be simulated by running the three L-R queries $\{x,y\}-\{z\}$, $\{y,z\}-\{x\}$, and $\{x,z\}-\{y\}$. (Note that this may result in a tripling of cost, which is significant in many applications.) There exist data sets for which L-R queries can elicit all features (for various values of $\ell,r$) while 2/3 queries may fail.

\begin{proposition}\label{prop:lr}
For any $\ell, r \geq 1$, there exists a data set $X$ of size $N=|X|=\ell+r$ and a feature set $\mathcal{F}$ of size $M=|\mathcal{F}|=1+\ell+r$ such that $\ell-r$ queries can completely recover all features while no $\ell'-r'$ query can guarantee the recovery the first feature if $\ell'<\ell$ or if $r' < r$.
\end{proposition}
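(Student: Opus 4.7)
The plan is to exhibit an explicit construction and then verify the two required properties. Set $X = \{x_1, \dots, x_\ell, y_1, \dots, y_r\}$ and $\mathcal{F} = \{f_0\} \cup \{g_1, \dots, g_\ell\} \cup \{h_1, \dots, h_r\}$, where $f_0$ is $1$ on every $x_i$ and $0$ on every $y_j$; each $g_i$ is $1$ on $\{x_1,\dots,x_\ell\}\setminus\{x_i\}$ and $0$ everywhere else; and each $h_j$ is $1$ on $\{x_1,\dots,x_\ell\}\cup\{y_j\}$ and $0$ everywhere else. Then $|X|=\ell+r$ and $|\mathcal{F}|=1+\ell+r$, as required.

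For the positive direction, I would show that every feature arises as the unique valid answer to some $\ell'$-$r'$ query with $\ell'\le\ell$ and $r'\le r$. The query $L=\{x_1,\dots,x_\ell\}$, $R=\{y_1,\dots,y_r\}$ admits only $f_0$, because each $g_i$ fails on the included $x_i\in L$ and each $h_j$ fails on the included $y_j\in R$. To elicit $g_i$, use $L=\{x_1,\dots,x_\ell\}\setminus\{x_i\}$ and $R=\{x_i\}$ (padded in $R$ by elements of $Y$ if desired): $f_0$ fails at $x_i\in R$, each $g_{i'}$ with $i'\ne i$ fails at $x_{i'}\in L$, and each $h_j$ fails at $x_i\in R$. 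Eliciting $h_j$ is symmetric via $L=\{y_j\}$ and $R=\{y_1,\dots,y_r\}\setminus\{y_j\}$.

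For the negative direction, I would produce, for any $\ell'$-$r'$ query $L$-$R$ with $\ell'<\ell$ or $r'<r$, a ``decoy'' distinct from $f_0$ that is also a valid answer. If $L\not\subseteq\{x_1,\dots,x_\ell\}$ or $R\not\subseteq\{y_1,\dots,y_r\}$, then $f_0$ is not even a valid answer, so assume both containments. In the case $\ell'<\ell$, pick any $x_{i^\ast}\in\{x_1,\dots,x_\ell\}\setminus L$; then $g_{i^\ast}$ is $1$ on all of $L$ and $0$ on all of $R$, so the oracle may legitimately return $g_{i^\ast}$ instead of $f_0$. Symmetrically, in the case $r'<r$, choose $y_{j^\ast}\in\{y_1,\dots,y_r\}\setminus R$ and use $h_{j^\ast}$, which is $1$ on $L\subseteq\{x_1,\dots,x_\ell\}$ and $0$ on $R\subseteq\{y_1,\dots,y_r\}\setminus\{y_{j^\ast}\}$.

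The only real design step is engineering the families $\{g_i\}$ and $\{h_j\}$ so that they simultaneously (i) block $f_0$'s uniqueness whenever $\ell'$ or $r'$ is short by an example of the relevant type, and (ii) remain individually distinguishable at the full $\ell$-$r$ budget. Matching this dual role for both sides is the only subtlety; once the construction is pinned down, every verification reduces to a direct case check on the definitions of $f_0$, $g_i$, and $h_j$.
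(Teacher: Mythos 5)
Your construction is identical to the paper's (same $f_0$, $g_i$, $h_j$, up to renaming), and both the positive and negative directions proceed by the same decoy argument; the only cosmetic difference is that the paper elicits $g_i$ and $h_j$ with the even smaller queries $\emptyset$--$\{x_i\}$ and $\{y_j\}$--$\emptyset$. Your proof is correct and essentially the same as the paper's.
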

\begin{proof}
Let the examples be $X=L\cup R$ where $L=\{x_1,x_2,\ldots,x_\ell\}$ and $R=\{x'_1,\ldots,x'_r\}$. Let $\mathcal{F}=\{f\}\cup G\cup H$ where the feature $f$ satisfies $f(x)=1$ if $x\in L$ and $f(x)=0$ if $x\in R$. Define the features $g_1,g_2,\ldots, g_\ell \in G$ to be $g_i(x)=1$ for all $x\in L\setminus \{x_i\}$ and $g_i(x_i)=0$, otherwise. Define $H=\{h_1,\ldots, h_r\}$ where $h_j(x)=0$ for all $x \in R\setminus \{x'_j\}$ and $h_j(x)=1$, otherwise. It is clear that the query $L-R$ necessarily recovers $f$, the query $\emptyset-\{x_i\}$ recovers $g_i$, and the query $\{x'_j\}-\emptyset$ recovers $h_j$. Moreover, for any query $L'-R'$ with $x_i \not\in |L'|$, it is clear that $g_i$ is as good an answer as $f$. Conversely, if $x'_j\not \in R'$, then clearly $h_j$ is as good an answer as $f$. Hence, if the feature $f$ is ``least salient'' in that other features are always returned if possible, no $\ell'-r'$ query will recover $f$.
\end{proof}

\section{Experiments}

\begin{figure*}[t!]
\centering
\includegraphics[trim=1cm 2.5in 1cm 0.5in, width=6in]{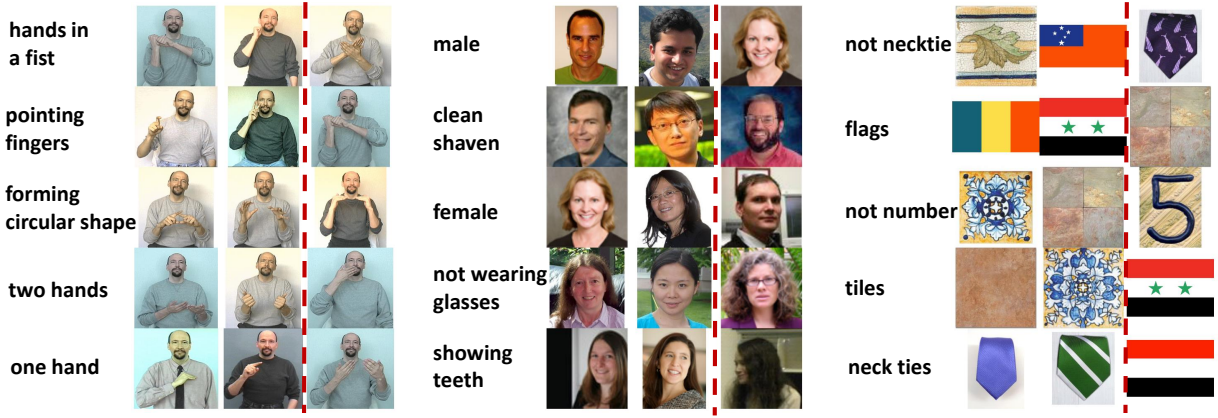}
\caption{The first five features obtained from a representative run of the Adaptive Triple algorithm on the signs (left), faces (middle) and products (right) datasets. Each triple of images is shown in a row beside the proposed feature, and the two examples declared to have that feature are shown on the left, while the remaining example is shown on the right.}\label{fig:face}
\end{figure*}

We tested our algorithm on three datasets: 1) a set of 100 silent video snips of a sign-language speaker \cite{ASLdictionary}; 2) a set of 100 human face images used in a previous study \cite{tamuz2011adaptively}; 3) a set of 100 images of ties, tiles and flags from that same study \cite{tamuz2011adaptively}.
All the images and videos were initially unlabeled. The goal was to automatically elicit features that are relevant for each dataset and to label all the items with these features. We implemented our Adaptive Triple algorithm on the popular crowdsourcing platform, Amazon Mechanical Turk, using two types of crowdsourcing tasks. In a \textit{feature elicitation} task, a worker is shown three examples and is asked to specify a feature that is common to two of the examples but is not present in the third. In a \textit{labeling} task, a worker is shown one feature and all examples and is asked which examples have the feature. To reduce noise, we assigned each labeling task to five different workers, assigning each label by majority.

To compare adaptivity to non-adaptivity, we implemented a {\bf Random Triple} algorithm that picks a set of random triples and then queries them all. To compare triples to pairs, we also implemented an {\bf Adaptive Pair} algorithm, defined in the analogous way to the random triple algorithm except that it only does pair queries.

The Adaptive Triple algorithm automatically determines which sets of examples to elicit features from and which combination of example and feature to label.
Figure~\ref{fig:face} shows the first five queries of the Adaptive
 Triple algorithm from one representative run on the three datasets. For example, on the \textit{face} data, after having learned the broad gender features \textit{male} and \textit{female} early on, the algorithm then chooses all three female faces or all three male faces to avoid duplicating the gender features and to learn additional features.

We compared the Adaptive Triple Algorithm to several natural baselines: 1) a non-adaptive triple algorithm that randomly selects sets of three examples to query; 2) the Adaptive Pairs algorithm; 3) the standard tagging approach where the worker is shown one example to tag at a time and is asked to return a feature that is relevant for the example. We used two complementary metrics to evaluate the performance of these four algorithms: the number of \textit{interesting and distinct} features the algorithm discovers, and how efficiently can the discovered features partition the dataset.

In many settings, we would like to generate as many distinct, relevant features as possible. On a given data set, we measure the distance between two features by the fraction of examples that they disagree on (i.e. the Hamming distance divided by the number of examples). We say that a feature is \textit{interesting} if it differs from the all 0 feature (a feature that is not present in any image) and from the all 1 feature (a features that is ubiquitous in all images) in at least 10\% of the examples. A feature is \textit{distinct} if it differs in at least 10\% of the examples from any other feature. If multiple features are redundant, we represent them by the feature that was discovered first.

\begin{table}
\resizebox{\columnwidth}{!}{%
\begin{tabular}{llll}
  & signs & faces & products  \\
  \hline
{\bf adaptive triple} & {\bf 24.5 (3.8}) & {\bf 25.3 (0.3)} & {\bf 19 (1.4)}   \\
random triples	& 12.5 (0.4) & 18.7 (2.7) & 14 (1.4)   \\
adaptive pairs 	&11.5 (1.1) & 14.5  (1.8) & 10.5 (0.4)  	\\
tagging			& 9 (0.4) & 13 (0.71)	& 12 (0.4) \\
\end{tabular}}
\caption{Number of interesting and distinct features discovered. Standard error shown in parenthesis. }

\label{table:number}
\end{table}

Table~\ref{table:number} shows the number of interesting and distinct features discovered by the four algorithms. On each dataset, we terminate the algorithm after 35 feature elicitation queries. Each experiment was done in two independent replicates--different random seeds and Mechanical Turk sessions. The Adaptive Triple algorithm discovered substantially more features than all other approaches in all three datasets. The non-adaptive approaches (random triples and tagging) were hampered by repeated discoveries of a few obvious features--one/two-handed motions in signs, male/female in faces and product categories in products. Once Adaptive Triples learned these obvious features, it purposely chose sets of examples that cannot be distinguished by the obvious features in order to learn additional features. Adaptive comparison of pairs of example performed poorly not because of redundant features but because after it learned a few good features, all pairs of examples can be distinguished and the algorithm ran out of useful queries to make. This is in agreement with our analysis of hierarchical features. Pairwise comparisons are only guaranteed to find the base-level features of the hierarchy while triples can provably find all the features.

To evaluate how efficiently the discovered features can partition the dataset, we compute the average size of the partitions induced by the first $k$ discovered features. More precisely, let $f_t$ be the $t$th discovered feature. Then features $f_1, ..., f_k$ induces a partition on the examples, $P_1, ..., P_R$, such that examples $x_i, x_j$ belong to the same partition if they agree on all the features $f_1, ..., f_k$. The average fraction of indistinguishable images is $g(\{f_1, ..., f_k\}) = \sum_r |P_r|^2/N^2$. Before any feature were discovered, $g = 1$. If features perfectly distinguish every image, then $g = 1/N$.

\begin{figure}
\centering
\includegraphics[trim=2cm 3in 2cm 0cm, width=3in]{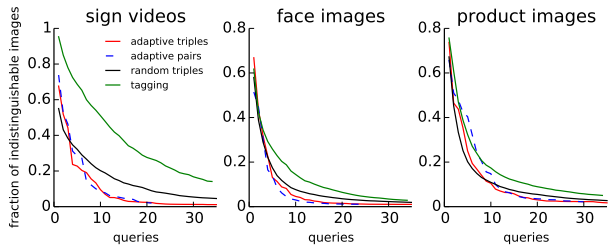}
%\vspace{-0.5cm}
\caption{Comparisons of the adaptive triple algorithm with benchmarks.}\label{fig:scattering}
\end{figure}

In Figure~\ref{fig:scattering}, we plot the value of $g$ for the adaptive triple algorithm and the benchmarks as a function of number of queries. The adaptive algorithms requires significantly fewer queries to scatter the images compared to the non-adaptive algorithms. On the sign data set, for example, the adaptive triple required 13 queries to achieve $g = 0.05$ (i.e. a typical example is indistinguishable from 5\% of examples), while the random triples required 31 queries to achieve the same $g = 0.05$.  Adaptive Triples and Adaptive Pairs both achieved rapid decrease in $g$, indicating that both were discovering good discriminatory features. However, as we saw above, Adaptive Pairs terminated early because it no longer had any unresolved pairs of examples to query, while Adaptive Triples continued to discover new features.

\section{Discussion}

We have introduced a formal framework for modeling feature discovery via comparison queries.
Consistent with previous work \cite{Patterson2012SunAttributes}, we demonstrated that tagging can be inefficient for generating features that are diverse and discriminatory. Our theoretical analysis suggested that the Adaptive Triple algorithm can efficiently discover features, and our experiments on three data sets provided validation for the theoretical predictions.
Moreover, unlike previous non-adaptive feature elicitation algorithms which had to detect redundant features (either using humans or natural language processing), our algorithm is designed to avoid generating these redundant features in the first place.

A key reason that our algorithm outperformed the non-adaptive baseline is that in all three of our data sets there were some features that were especially salient, namely gender for faces, one or two hands for sign language, and product type for products. A interesting direction of future work would be to investigate the performance of adaptive algorithms in other types of data.

Our analysis suggests that homogeneous crowds and crowds of generalists should be most challenging for eliciting features. 
Modeling the salience of features and the diversity of the crowd are also interesting directions of future work. In particular, our algorithm made no explicit attempt to find the most salient features, e.g., one could imagine aggregating multiple 2/3 responses to find the most commonly mentioned features. In addition, one could leverage the fact that different users find different features to be salient and model the diversity of the crowd to extract even more features. 

\bibliography{bib}
\bibliographystyle{plain}

\clearpage
\onecolumn
\appendix
\section{Analysis of the hierarchical feature model}

\begin{proof} (Proof of Lemma~\ref{lem:Darybasic})
Let $(x_i, x_j, x_k)$ be any triplet of examples. Let $f_{lca}$ be the lowest common ancestor of $x_i$, $x_j$ and $x_k$ in $T$; that is, $f_{lca}$ is the lowest feature $f$ in $T$ such that $\x{i}{f} = \x{j}{f} = \x{k}{f} = 1$. If $f_{lca}$ is also the lowest common ancestor of any two out of $( x_i, x_j, x_k )$, then the query $\{x_k, x_j, x_i\}$ will return \textit{NONE}; otherwise it returns a node feature.

Recall that in the Adaptive Hybrid algorithm, after the double queries in step 1, we associate each feature $f_j$ with a single example. Thus, for the rest of the proof we assume that there exists a one-to-one mapping between an example and a feature at the leaf node of $T$.

Let $f$ be any feature in $T$, and let $L_f$ be the subset of triples $(x, y, z)$ such that $f$ is the lowest common ancestor to $x$, $y$ and $z$. For any triple $(x, y, z)$, let $I(x, y, z) = 1$ if one of the triple queries $\{x, y, z\}$ returns a feature; otherwise let $I(x, y, z) = 0$. The total number of triple queries which will return a feature can be written as: $\sum_{f \in T} \sum_{(x, y, z) \in L_f} I(x, y, z)$.

Suppose that $f$ has $k$ children in $T$. Let $n_1 \geq n_2 \ldots \geq n_k$ be the number of examples associated with these children. We have two cases.

In the first case, $n_1 \geq 2$. Let us call such a feature \textit{heavy}. In this case, querying any triple $(x, y, z)$ where $x$ and $y$ are from the first child will result in a feature. The fraction of such triples in $L_f$ is at least $\frac{n_1 (n_1 - 1)}{\sum_i n_i (\sum_i n_i - 1)} \geq \frac{1}{2D^2}$. Thus, for a heavy $f$, $\sum_{(x, y, z) \in L_f} I(x, y, z) \geq \frac{|L_f|}{2 D^2}$.

In the second case, $n_1 = 1$. Call such an $f$ a \textit{light} feature. As $T$ is not a star, there exists at least one leaf $l \in T$ which does not have $f$ as an ancestor. Consider triples of the form $(x, y, l)$ where $x$ and $y$ are descendants of $f$ such that $f$ is their lowest common ancestor, and let $S_{f, l}$ be the set of all such triples.

It turns out that $S_{f, l}$ has some nice properties. First, $|S_{f, l}| \geq |L_f|/D$; this is because if $f$ has $k \leq D$ children, then, $|S_{f, l}| = \binom{k}{2}$ while $|L_f| = \binom{k}{3}$.  Second, if $(x, y, l)$ is a triple in $S_{f, l}$, then the queries $(x, y, l)$ will return a new feature. Finally, suppose we map each light feature $f$ to the set $S_{f, l}$; then, the sets $S_{f, l}$ and $S_{f', l}$ are disjoint when $f \neq f'$.

Therefore,
\[ \sum_{\text{light\ } f} |L_f|  \leq \sum_{\text{light\ }f} D |S_{f, l}| \leq D \sum_{f} \sum_{(x, y, z) \in L_f} I(x, y, z) \]
Combining the two cases, we get:
\[ \sum_{\text{light\ } f} |L_f| + \sum_{\text{heavy\ }f} |L_f| \leq (D + 2D^2) \sum_{f} \sum_{(x, y, z) \in L_f} I(x, y, z) \leq (3D^2) \sum_f \sum_{(x, y, z) \in L_f} I(x, y, z) \]

Therefore, if we draw a random triple of examples from the subtree below $f$, and make the corresponding three triple queries, the probability that we get a new feature is $\geq \frac{1}{3D^2}$. The lemma follows.
\end{proof}

\begin{proof} (Proof of Proposition~\ref{lem:Dary} )
We begin by observing that any time the queries in Step 5 of the algorithm return a feature, it must be a new feature that we haven't seen before.

Each leaf feature $f$ is the unique solution to the double query $\{x, y\}$, where $x$ is under $f$ and $y$ is under a sibling leaf feature. Thus, all the leaf features are identified by double queries. Moreover, the double queries return at most $N$ \textit{NONE}  answers.

Let $f$ be the feature that we have currently popped from the queue $Q$, i.e. the feature that we are currently exploring. Let $T_f$ be the induced subtree of $T$ with root at $f$ and leaves the set $\mbox{off}(f)$. Note that $T_f$ is the true underlying subtree (that is, not the subtree that we have found), and it is also $D$-ary.  The Adaptive Hybrid algorithm now randomly samples triples of examples from $\mbox{off}(f)$ to query. If $T_f$ is a star, then there are no new features to be found and this subroutine stops after $\theta$ queries. Otherwise, when $\theta = O(D^2 \log \frac{M}{\delta})$, from Lemma~\ref{lem:Darybasic}, with probability $\geq 1 - \frac{\delta}{M}$, it returns a new feature with high probability. Therefore the probability of finding all $M$ features is $\geq 1 - \delta$.
The algorithm terminates after $O(N+MD^2 \log \frac{M}{\delta})$ total queries.
\end{proof}

\section{Analysis of the independent feature model}

\begin{proof} (Proof of Lemma~\ref{lem:independent1})
Let $f$ be any feature, and let $x_i$ and $x_j$ be a randomly drawn pair of examples from $D$. The probability that $f$ satisfies the double query $(x_i, x_j)$ is $2p_f (1 - p_f)$; moreover, the probabilty that $f$ is the only feature that satisfies this query is $\Delta_f =  2p_f (1 - p_f) \prod_{g \neq f} \left(1 - 2p_g (1 - p_g)\right)$.

Now consider the process of drawing $N/2$ pairs of random examples from $D$. The probability that the $i$-th pair $(x, y)$ is such that the double query $(x, y)$ is uniquely satisfied by feature $f$ is $\Delta_f$. The first part of the lemma follows from a coupon collector's argument. The proof of the second part is very similar.
\end{proof}

\begin{proof} (Proof of Lemma~\ref{lem:ubindependent})
Let $f_j$ be a feature with frequency $p_j$, and let $(x_1, x_2, x_3)$ be a randomly drawn triple of examples. The probability that $f_j$ satisfies the triple query $(x_1, x_2, x_3)$ is $q_j = 3p_j^2 (1 - p_j)$. Let $\mathcal{F}$ be the full set of features. Suppose we have already seen the set of features $S$. Then the probability that the next query will discover an unseen feature is at least: $1 - \prod_{j \in \mathcal{F} \setminus S}(1 - q_j)$, and therefore the expected time to discover the next unseen feature is at most:
\[ \frac{1}{1 - \prod_{j \in \mathcal{F} \setminus S}(1 - q_j)} \]
This quantity is an decreasing function of $q_j$. Thus, the worst case order of discovering features that maximizes the expected discovery time is from high to low values of $q_j$.

WLOG we will assume that $q_1 \geq q_2 \geq \ldots \geq q_M$. The total expected discovery time is at most:
\[
\sum_{j=1}^M \frac{1}{1-\prod_{i \geq j} (1-q_i)} \leq \sum_{j = 1}^M \frac{1}{ 1 - (1-q_j)}
\]

\begin{comment}
\[ \sum_{j=1}^{k} \sum_{i=1}^{d_j} \frac{1}{1 - (1 - q_j)^{d_j - i} \prod_{i' \geq j} (1 - q_i')^{d_{i'}}} \leq \sum_{j=1}^{k} \sum_{i=1}^{d_j} \frac{1}{1 - (1 - q_j)^{d_j - i}} \]

Since $1 - (1 - q_j)^{d_j - i} \geq q_j$, the first part of the lemma follows. For the second part, observe that for any $j$,
\begin{eqnarray*}
 \sum_{i=1}^{d_j} \frac{1}{1 - (1 - q_j)^{d_j - i}} & \leq & \sum_{i=1}^{d_j} \left(1 + q_j^{d_j - i} + q_j^{2(d_j - i)} + q_j^{3(d_j - i)} + \ldots \right) \\
& \leq & d_j + \sum_{i \geq 1} i q_j^i \leq d_j + \frac{1 - q_j}{q_j^2}
\end{eqnarray*}
The lemma thus follows.
\end{comment}
\end{proof}

\begin{proof} (Proof of Lemma~\ref{lem:random})
Suppose that the features are discovered according to some order $\pi$. Then, the probability a random triple elicits the last feature $i$ is:
\[ \prod_{i < N} (1 - q_{\pi(N)}) q_{\pi(N)} \]
Of course this is minimized when $q_{\pi(N)}$ is minimized. Although a general adaptive algorithm can have a structure to the triples it chooses, we can use the union bound to argue to bound the probability that any triple elicits the last feature. In this case, each triple is essentially random.
\end{proof}

\end{document}